\newcommand{\sub}{_}
\def\su{^}
\newcommand{\know}{{\it Know}}
\newcommand{\bel}{{\it Bel}}
\newcommand{\lt}{<}
\newcommand{\gt}{>}
\newcommand{\A}{{\cal A}}
\newcommand{\D}{{\cal D}}
\newcommand{\cal}{\mathcal}
\newcommand{\ivar}{{\iota}}
\renewcommand{\L}{{\cal L}}
\newcommand{\N}{{\cal N}}
\renewcommand{\O}{{\cal O}}
\newcommand{\Q}{{\cal Q}}
\newcommand{\obs}{{\it SF}}
\newcommand{\fsa}{\mathit{\Sigma}}
\newcommand{\X}{{\cal X}}
\newcommand{\ti}{\textit}
\newcommand{\set}[1]{\left\{ #1 \right\}}
\newcommand{\la}{\langle}
\newcommand{\ra}{\rangle}
\newcommand{\eg}{\emph{e.g.}~}
\newcommand{\vs}{vs.~}
\newcommand{\qi}{{Q\sub 0}}
\newcommand{\qf}{{Q\sub F}}
\newcommand{\poss}{\ti{Poss}}
\newcommand{\ins}{{S_{0}}}
\newcommand{\thereis}{\exists}
\newcommand{\ts}{{T^*}}
\newcommand{\alt}{\textit{alt}}
\newcommand{\true}{\mbox{{\it true}}}
\begin{document}
\title{On Plans With Loops and Noise} 



\author{Vaishak Belle} 
\affiliation{%
  \institution{University of Edinburgh \& Alan Turing Institute}
}
\email{vaishak@ed.ac.uk}

\begin{abstract} In an influential paper, Levesque proposed a formal specification for analysing the correctness of program-like plans, such as conditional plans, iterative plans, and knowledge-based plans. He motivated a logical characterisation within the situation calculus that included binary sensing actions. While the characterisation does not immediately yield a practical algorithm, the specification serves as a general skeleton to explore the synthesis of program-like plans for reasonable, tractable fragments. 

Increasingly, classical plan structures are being applied to stochastic environments such as robotics applications. This raises the question as to what the specification for correctness should look like, since Levesque's account makes the assumption that sensing is exact and actions are deterministic. Building on a
situation calculus theory for reasoning about
degrees of belief and noise, we revisit the execution
semantics of generalised plans. The specification is then used to
analyse the correctness of example plans.

\end{abstract}
\keywords{Generalised planning; program-like plans; nondeterminism; noisy acting and sensing; reasoning about knowledge and belief}

\maketitle

\section{Introduction} 
\label{sec:introduction}

In an influential paper, Levesque \cite{DBLP:conf/aaai/Levesque96}  proposed a formal specification for analysing the correctness of program-like plans, such as conditional plans, iterative plans, and knowledge-based plans. The problem setting is this: in a world where the agent can affect changes by acting and learn about the truth of fluent values by sensing, what should a plan \textit{look like}  and how should we verify that it is \textit{correct}? 
As a simple example, consider the problem of chopping down a tree of unknown thickness using a \textit{chop} action that reduces its thickness by a unit.  
Clearly, no fixed sequence of chops would work; however, if one is able to check after each chop whether the tree still stands, then a simple iterative plan like in {Figure}~\ref{fig:tc} achieves the desired outcome. To analyse such plans, Levesque motivated an epistemic  characterisation within the logical language of the situation calculus that included binary sensing actions. In that account, the planning task is to find a structure such that for every situation (that is, world state) considered initially possible, it leads to a final situation where the goal holds. While the characterisation does not immediately yield a practical algorithm, the specification serves as a general skeleton to explore the synthesis of program-like plans for reasonable, tractable fragments \cite{Cimatti200335,DBLP:conf/aips/BonetPG09,DBLP:conf/aips/HuG13,srivastava2015tractability}.  Moreover, recent advances in multi-agent epistemic planning \cite{DBLP:conf/aips/KominisG15,muise-aaai-15} are encouraging, and a formal characterisation like the one by Levesque can help relate that work to generalised planning.

\begin{figure}[t]
  \centering
     \includegraphics[width=.4\textwidth]{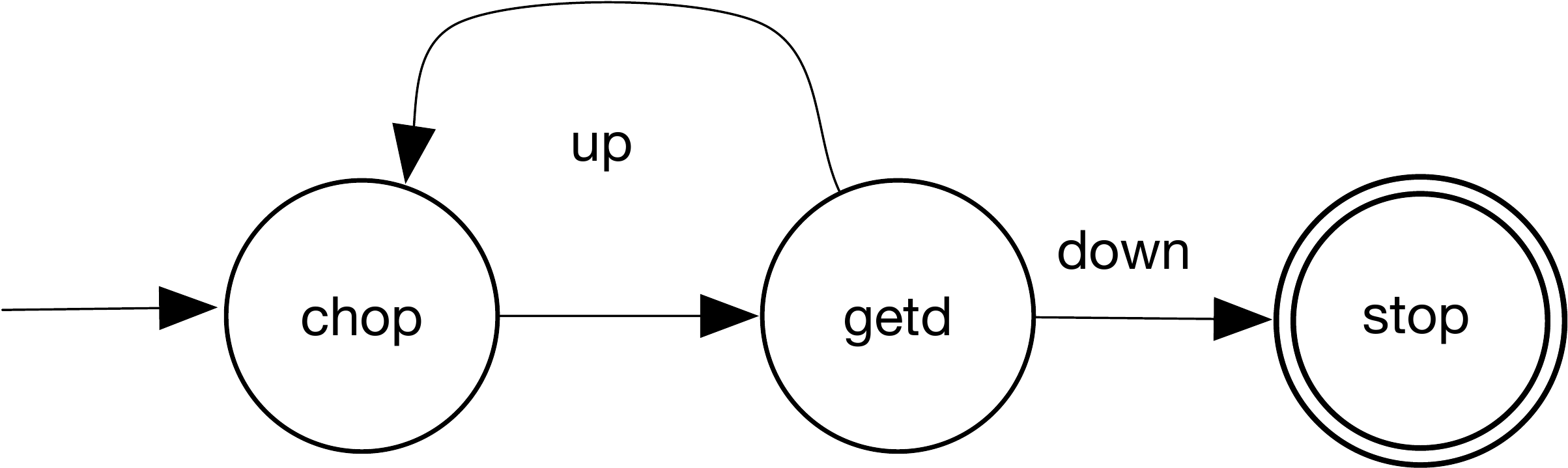}
  \caption{controller for chopping a tree of unknown non-zero thickness $d$, by means of a binary sensing action $getd$}
  \label{fig:tc}
\end{figure}

Increasingly, classical plan structures are being applied to stochastic environments such as robotics applications \cite{mataric2007robotics}. Indeed, uncertainty in the initial parameters of the planning problem is equivalent to reasoning about belief states, so robotics has long served as a motivation for generalised planning algorithms \cite{DBLP:conf/aips/BonetPG09}. Approaches such as \cite{srivastava2015tractability} further allow a degree of nondeterminism  in the effects of actions.  This raises the question as to what the specification for correctness should look like in general, since Levesque's account makes the assumption that sensing is exact and actions are deterministic. In fact, Levesque concludes his paper with: \smallskip 

{\it  ``But suppose that sensing involves reading
from a noisy sensor \( \ldots \) how robot
programs or planning could be defined in terms of
this account still remains to be seen."} \smallskip 

To the best of our knowledge, no attempt has been made to address the issue at the level of generality of the original paper, and this is precisely our aim here. Building on a
situation calculus theory for reasoning about
degrees of belief and noise, our main contribution is to revisit the execution
semantics of generalised plans. Concretely, beginning with the established case of exact sensing and deterministic acting, we turn to the case of exact sensing but noisy acting. We then motivate the case where both acting and sensing is noisy. We formally establish some compatibility theorems between these accounts.  Finally, the specification is then used to
analyse the correctness of example plans. 

We reiterate that the technical thrust of this paper is  limited to formal characterisations: at no point  will we concern ourselves with algorithmic ideas or plan heuristics. 
We will mainly show how the account correctly handles changes to the state of the  world as a result of noisy actions, as well as the changes to the beliefs of an agent after noisy sensing. 

In this paper, there is a natural evolution of theory and formulation when compared to Levesque's account. The original account was based on the situation calculus extended for knowledge and sensing \cite{citeulike:528170} derived from classical epistemic logic \cite{reasoning:about:knowledge}. Our account is based on the situation calculus extended for probabilistic belief and noise \cite{bacchus1999171} derived from probabilistic epistemic logic \cite{bacchus1990representing,174658}. 
{In principle, of course, any logical language  for reasoning about actions and probabilities could have been used for the formalisation, e.g. \cite{kooi2003probabilistic}. But by using the situation calculus, we can clearly explicate the generalisation from Levesque's account, but also benefit from its first-order expressiveness, at least for axiomatising the planning domain. Moreover, to logically characterise unbounded iteration and transitive closure, we use second-order logic, as would Levesque. 

Its worth remarking that while this logical machinery makes the account more involved than (say) POMDP specifications \cite{Kaelbling199899}, having a more general language is useful for contextualising involved extensions such as the handling of non-unique prior distributions. For example, in \cite{DBLP:journals/ijrr/KaelblingL13}, it is argued that when planning in highly stochastic and unknown environments, it is useful to allow a margin of error in what the values of fluents will be. This means that the planning system has to reason about multiple distributions satisfying such constraints, and as in \cite{DBLP:journals/ijrr/KaelblingL13}, the use of logical connectives allows us to express such scenarios effortlessly. 
} 

\renewcommand{\qed}{}

\section{A Theory of Knowledge and Action} 
\label{sec:background}

We will not go over the language \( {\mathcal L} \) of the situation calculus in detail \cite{McCarthy:69,reiter2001knowledge}, but simply note that it 
is a
many-sorted dialect of predicate calculus, with sorts for \emph{actions},
\emph{situations}, denoting a (possibly) empty sequence of actions, and \emph{objects}, for everything else. A special constant \( \ins \) denotes the real world initially, and the term \( do(a,s) \) denotes the situation obtained on
doing \( a \) in \( s. \) Fluents, whose last argument is always a situation, can be used to capture changing properties. Following \cite{reiter2001knowledge}, application domains are axiomatised as \emph{basic action theories}, which stipulate the conditions under which actions are executable, and their affects on fluents, while embodying a monotonic solution to the frame problem. For example, for the tree chop problem, using the functional fluent $d$ to mean the thickness of the tree, we may have:\footnote{Free variables are assumed to be implicitly quantified from the outside. For readability purposes, we let  \( \ivar \) range over initial situations only, that is, where no actions have occurred.}\\

$\begin{array}{l}
	\poss(chop(x),s) \equiv d(s)  \geq x. \\[2ex] 
	d(do(a,s)) = u ~\equiv \\ \quad \quad\quad\quad\quad \quad\quad\quad  (a = chop(x) \land d(s) = u+x) ~\lor \\  \quad \quad\quad\quad\quad  \quad\quad\quad
	 (a\neq chop(x) \land d(s) = u). 
\end{array}$ \\

We let $chop$ be an abbreviation for $chop(1)$. 
These axioms say that $chop$ is only possible when the tree's thickness is non-zero, and that the current value of $d$ is $u$ if and only if its previous value was also $u$ and no $chop$ action occurred, or its previous value was $u+1$ and a single $chop$ action was executed. 

A special binary fluent \( K(s',s) \) denotes that \( s' \) is a possible world when the agent is at \( s. \)  As usual, knowledge is defined as truth at accessible worlds:
\[
	\know(\phi,s) \doteq \forall s'.~ K(s',s) \supset \phi[s'].
\]
A modeler axiomatises the initial beliefs of the agent:\footnote{Like in modal logic \cite{reasoning:about:knowledge}, constraints on \( K \) correspond to appropriate properties for \( \know \) in the truth theory \cite{citeulike:528170}. We assume, as is usual, that \( \know \) has the full power of introspection and closed under logical reasoning -- so-called \textbf{S5} --  by consequence of a stipulation that \( K \) is an equivalence relation. Also note that, unlike standard modal logic, where ``worlds'' are static states of affairs that provide truth values to propositions, the model theory of the situation calculus instantiates \textit{trees}: each world describes the values of fluents initially, but also after any sequence of actions.}  
\begin{equation}\label{eq:knowledge thickness discrete}
	K(\ivar, \ins) \supset (1 \leq d(\ivar) \leq 10).
\end{equation} 
This is equivalently written using $\know$ and a special term $now$ to denote the current situation as: $$\know(d(now) = 1 \lor \ldots \lor d(now) = 10, \ins).$$ 
The observations obtained by the agent are described using a special function \( \obs. \) In the tree chop problem, we may have: \begin{equation}\label{eq exact sensing tree chop}
	\obs(a,s) = \begin{cases}
		{\it down} & a=getd\land d(s) = 0 \\ 
		{\it up} & a=getd\land d(s) \neq 0 \\
		{\it 0} & \textrm{otherwise} 
	\end{cases}
\end{equation}
which says that on doing the sensing action $getd$, if the tree is still standing, the sensor returns $up$, else it would return $down.$

A fixed successor state axiom for \( K \) then determines how the agent's knowledge changes over actions: 
\[\begin{array}{l}
	K(s', do(a,s)) \equiv \exists s''[K(s'', s) \land s' = do(a,s'')  \land Poss(a, s'')  \\ \hspace{2.5cm} ~\land (\obs(a,s'') =\obs(a,s)))]
\end{array}
\]
which has the effect of eliminating worlds that disagree with truth at the real world, leading to \emph{knowledge expansion.} %

\subsection*{Plans with loops} 
\label{sub:plans_with_loops}

We will be interested in computable program-like plans.  
We consider finite state controllers, which are fairly common in the literature \cite{DBLP:conf/aips/BonetPG09,DBLP:conf/aips/HuG13,DBLP:conf/ijcai/HuL11}.  


\begin{definition}\label{defn fsa} Suppose \( \A \) is a finite set of (parameterless) action terms, and \( \O \)  a finite set of objects, denoting {observations}. 
     A {finite memoryless plan} \( \X \) is a tuple \( \la \Q, \qi,\qf,  \gamma, \delta \ra \):\footnote{Assume terms such as \( chop \in \A \) and \( \set{up,down} \subseteq \O. \)}   \begin{itemize}
     \item \( \Q \) is a finite set of  {control states};
     \item \( \qi \in \Q \) is the  {initial}   state, and \( \qf\in \Q \) the final one;
          \item \( \gamma \in [\Q\su - \rightarrow \A] \) is a  {labelling} function for \( \Q\su - = \Q - \set{\qf} \);
          \item \( \delta \in [\Q\su - \times \O  \rightarrow \Q] \) is a {transition} function. 
     \end{itemize}

\end{definition} 

\begin{example} For Figure \ref{fig:tc}, we might have 
$\Q = \{\qi, Q, \qf\},$  $\gamma(\qi) = chop, \gamma(Q) = getd, \delta(Q, up) = \qi,$ $\delta(Q, down) = \qf,$ and $\delta(\qi, 1) = Q.$ 
\end{example}

Informally, we may think of applying these plans in an environment as follows: starting from \( \qi \) that advises the action \( \gamma(\qi) \), the environment executes that action and changes externally to return  \( o\in \O \).  Then, internally, we reach the control state \( \delta(\qi,o) \) and so on, until \( \qf \). To reason about these plans in an environment enabled in the situation calculus, we need to encode the plan structure as a \( \L \)-sentence, and axiomatise the execution semantics over situations, 
%
for which we follow \cite{DBLP:conf/ijcai/HuL11} and define: 

\begin{definition} Let \( \fsa \) be the union of the following axioms for: \begin{enumerate}
	\item domain closure for the control states: \(
		(\forall q) ~\{ q = \qi \lor q = Q_1 \lor \ldots \lor q = Q_n \lor q = \qf \}; 
	\) 
	\item unique names axiom for the control states: \(
		Q_i \neq Q_j \text{ ~for~ } i\neq j; 
	\)  
	\item action association:   \( \forall Q \in \Q\su - \) of the form \( \gamma(Q) = a \);  
	\item transitions:   \( \forall Q \in \Q\su - \) of the form \( \delta(Q,o) = Q' \).

\end{enumerate}
\end{definition}

\begin{definition}\label{defn execution semantics T}
		We use \( T\su *(q,s,q',s') \) as abbreviation for $\forall T [\ldots \supset T (q, s, q', s')]$, where the ellipsis is the conjunction of the  universal closure of: \begin{itemize}
		\item $T(q,s,q,s)$ 
		\item \( T(q,s,q'',s'') \land T(q'',s'',q',s') \supset T(q,s,q',s') \)  
		\item \(  \gamma(q)=a \land Poss(a,s) \land \obs(a,s)=o \) \\ \qquad \(\land~   \delta(q,o)=q' \supset T (q, s, q', do(a, s)) \).
	\end{itemize}
\end{definition}

In English: \( T\su * \) is  the reflexive transitive closure of the one-step transitions in the  plan. 

We are now prepared to reason about plan correctness: \begin{definition}\label{defn Levesque correctness} For any goal formula \( \phi\in\L, \) basic action theory \( \D, \) plan \( \X \) and its encoding \( \fsa, \) we say \( \X \) is correct for \( \phi \) iff \[
	\D\cup \fsa \models \forall s.~K(s,\ins) \supset \thereis s' [T\su * (\qi,s,\qf,s') \land \phi(s')].
\] 
\end{definition}

\begin{example} Let \( \D\sub {dyn} \) denote the tree chop axioms, and then it is easy to see that \( \D\sub {dyn} \cup \set{\eqref{eq:knowledge thickness discrete}, \eqref{eq exact sensing tree chop}} \cup \fsa \), where \( \fsa \) represents the encoding of Figure \ref{fig:tc}, is correct for the goal \( d = 0. \) 
\end{example}

\section{A Theory of Probabilistic Beliefs} 
\label{sec:stochastic_environments}

Our objective now is to generalise the above well-understood framework on knowledge and loopy plans to a stochastic setting. 
The account of knowledge, deterministic acting and exact sensing was extended by  
Bacchus, Halpern, and Levesque \cite{bacchus1999171}    --  BHL henceforth --   to deal with degrees of belief in formulas, and in particular, with how degrees of belief should evolve in the presence of noisy sensing and acting, in accordance with Bayesian conditioning. The main advantage of a logical account like BHL is that it allows a specification of belief that can be partial or incomplete, in keeping with whatever information is available about the application domain. The account is based on 3 distinguished fluents: \( p,l \) and \( alt \). The \( p \) fluent here is a numeric analogue to \( K \) in that \( p(s',s) \) denotes the weight (or density) accorded to \( s' \) when the agent is at \( s. \) Initial constraints about what is known  can be provided as usual. For example:  \begin{equation}\label{eq p ini grasp}
	p(\ivar, \ins) = \begin{cases}
		.1 & \textrm{if }(1 \leq d(\ivar) \leq 10) \\
		0 & \textrm{otherwise}
	\end{cases}
\end{equation} 
is saying that the tree's thickness $d$ takes a value that is uniformly drawn from  $\{1,\ldots,10\}$. We provide a definition for a belief modality $\bel$ shortly, but the above constraint can be equivalently written as: 
$$ \bel(d(now) = 1, \ins) = .1$$and so on for the other values. As argued earlier, the logical account also allows for uncertainty about the prior distribution -- as needed, for example, in \cite{DBLP:journals/ijrr/KaelblingL13} -- by means of expressions such as: 
$$ \bel(d(now) = 1, \ins) \geq .05$$ which says that any distribution where $d$ takes a value of $1$  with a probability greater than or equal to $.05$ is a permissible one. 

The \( l \) fluent is used to express the likelihoods of outcomes. For example, suppose we had a sensor that would inform the agent about the numeric value of the $d$ fluent in a situation. Then an axiom of the form: 
\begin{equation}\label{eq sonar gaussian}
	l(getd(z),s) = \N(z;d(s),.25)
\end{equation}
says that the 
observed value on the sensor is normally distributed around the  true value, with a variance of .25. In robotics terminology \cite{thrun2005probabilistic}, the sensor is said to have a Gaussian error profile. 

To handle noisy actions, the idea is that if \( chop(x) \) represents a chop that decrement's the tree's thickness by $x$ units, assume a new action type \( chop(x, y) \) in that $x$ is the intended argument and  \( y \) is taken to be what actually happens. These are chosen by nature, and as such, out of the agent's control. These action types are retrofitted in successor state axioms: \begin{equation} \begin{array}{l}
	d(do(a,s)) =u \equiv (a=chop(x,y) \land u = d(s) -y) ~\lor \\ \quad\quad\quad\quad\quad\quad\quad\quad\quad\quad (a\neq chop(x,y) \land u = d(s))
\end{array}	
\end{equation}
says that \( d \) is actually affected by the second argument. 

Since the agent is assumed to not control \( y, \) we use \( alt \) to model the possible \emph{alternatives} to an intended action:\footnote{A more involved version would introduce {\it alt} as a fluent, allowing possible outcomes to be determined by a situation.} 
%
\begin{equation}\label{eq alt mv}
	alt(chop(x,y),a',z) \equiv a' = chop(x,z)
\end{equation}
says that, for example, $chop(1,2)$ is indistinguishable from $chop(1,3)$. To suggest that some alternatives are more likely than others, an axiom like 
%
\begin{equation}\label{eq likelihood noisy move}
	l(chop(x,y),s) = \N(y;x,.25)
\end{equation}
then says that the actual value is normally distributed around the intended value, with a variance of .25.  In robotics terminology, this is the equivalent to an action having additive Gaussian noise.  

Likelihoods and \( \alt \)-axioms  determine the probability of successors, enabled by the following successor state axiom: \begin{equation*}\label{eq:p ssa}
	\begin{array}{l}
		p(s',do(a,s)) = u \,\,\equiv\\
		\quad \quad \exists a', z, s''~[\alt(a,a',z) \land s' = do(a',s'') \land \poss(a',s'')    \,\land\\
        \quad\quad\quad \quad\quad\quad\quad\quad u = p(s'',s) \times l(a',s'')] \\
	 \lor\,\, \lnot\exists a', z,s''\\ \quad \quad ~[\alt(a,a',z) \land s' = do(a',s'') \land \poss(a',s'')
           \land u = 0]
	\end{array}
\end{equation*}
%
%
%
%
which essentially says that if two situations $s$ and $s'$ are considered epistemically possible, on doing $a$ at $s$, $do(a,s)$ and  $do(b, s')$  will also be considered epistemically possible, where $b$ is any $alt$-related action to $a$. Moreover, the $p$-value of $do(b,s')$ is that of $s'$ multiplied by the likelihood of the outcome $b$. 

Putting all this together, the degree of belief in \( \phi \) at \( s \) is defined as the weight of worlds where \( \phi \) is true: \[
\displaystyle	 \bel(\phi,s) \doteq \sum\sub {\set{s'\colon \phi(s')}} p(s',s) ~~\bigg/~~ \sum\sub {s'} p(s',s)
\]
We write \( K(s',s) \) to mean \( p(s',s)\gt 0, \) and \( \know(\phi,s) \doteq \bel(\phi,s) = 1. \) Finally, note that when the likelihood models are trivial, that is: $\forall a, s.~l(a,s) =1$, we are in the setting of nondeterministic but non-probabilistic   acting and sensing. 


\section{Controllers With Noisy Acting} 
\label{sub:plans_with_loops_revisited}

Our first objective will be to motivate a definition for analysing the correctness of plan structures when actions are noisy (that is, they are non-deterministic, and the actual outcome is not directly observable). We assume, for now, that sensing is exact. This then also  handles the case where actions are non-deterministic but observable immediately after, by way of a sensing action to inform the planner about the outcome.  
As far as the syntax of the plan structure goes, we will not want it to be any different than Definition \ref{defn fsa}; however, we will need to revisit Definition \ref{defn execution semantics T} to internalise the noisy aspects of acting by using $alt$.

\begin{definition}\label{defn execution semantics U}
		We use \( U\su *(q,s,q',s') \) as abbreviation for $\forall U [\ldots \supset U (q, s, q', s')]$, where the ellipsis is the conjunction of the  universal closure of: \begin{itemize}
		\item $U(q,s,q,s)$ 
		\item \( U(q,s,q'',s'') \land U(q'',s'',q',s') \supset U(q,s,q',s') \)  
		\item \(  \gamma(q)=a \land \thereis b,z~ (\alt(a,b,z) \land  Poss(b,s) \land \obs(b,s)=o\land \delta(q,o)=q') \supset U (q, s, q', do(b, s)) \).
	\end{itemize}
\end{definition}
The main new ingredient here over \( T\su * \), of course, is how control states transition from a situation to a successor. The reflexive transitive closure of \( U \) basically says that if the controller advises \( a \) and \( b \) is any action that is \( \alt \)-related to \( a, \) we consider the transition wrt the executability and the sensing outcome of the action \( b. \) The idea, then, is allow \( U\su * \) to capture the least set that accounts for all the successors of a situation where a noisy action is performed. We define:


\newcommand{\us}{{U\su *}}

\begin{definition}\label{defn theta  correctness} For any goal formula \( \phi\in\L, \) basic action theory \( \D, \) plan \( \X \) and its encoding \( \fsa, \) we say \( \X \) is correct for \( \phi \) iff \[
	\D\cup \fsa \models \forall s.~K(s,\ins) \supset \exists s'[U\su *(\qi,s, \qf, s') \land \phi(s')]. 
\] 

\end{definition}
It can be shown that the new semantics coincides with Levesque's account when the action theory is \emph{noise-free}: that is,  \( \alt \)-axioms are trivial \( \forall z (\alt(a,a',z) \equiv a = a') \), and \( l \) mimics the behavior of \( \obs \) in assigning 1 to situations that agree with the sensing outcome and 0 to those that disagree.
 Then: 
\begin{theorem} Suppose \( \D \) is a noise-free action theory, 	\( \X \) and \( \fsa \) are as above, and \( \phi \) is any situation-suppressed formula not mentioning the fluent \( K. \) Then, \( \X \) is correct for \( \phi \) in the sense of Definition \ref{defn Levesque correctness} iff \( \X \) is correct in the sense of Definition \ref{defn theta  correctness}.
\end{theorem}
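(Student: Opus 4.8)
The plan is to show that, under the noise-free hypotheses, the two correctness statements are the very same \( \L \)-sentence up to replacement of \( \D \)-equivalent subformulas, so that \( \D \cup \fsa \) entails one exactly when it entails the other. Both statements have the shape \( \forall s.~K(s,\ins) \supset \thereis s'[R\su *(\qi,s,\qf,s') \land \phi(s')] \), with \( R = T \) in Definition~\ref{defn Levesque correctness} and \( R = U \) in Definition~\ref{defn theta  correctness}. In the setting of the theorem the symbol \( K \) is the derived relation \( p(\cdot,\cdot)\gt 0 \), which is what \( K \) refers to in \emph{both} definitions when they are applied to a noise-free \( \D \), so the guard \( K(s,\ins) \) is literally the same formula; and \( \phi \), being situation-suppressed and \( K \)-free, is evaluated the same way at every situation. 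So everything reduces to establishing \( \D \models \forall q,s,q',s'.~(T\su *(q,s,q',s') \leftrightarrow U\su *(q,s,q',s')) \).

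For that reduction I would argue at the level of the defining clauses rather than manipulate the second-order abbreviations directly. Recall that \( R\su *(q,s,q',s') \) abbreviates \( \forall R[\psi_R(R) \supset R(q,s,q',s')] \), where \( \psi_T \) and \( \psi_U \) are the conjunctions of the universal closures of the three bullets of Definitions~\ref{defn execution semantics T} and~\ref{defn execution semantics U}. It therefore suffices to prove \( \D \models \forall R[\psi_T(R) \leftrightarrow \psi_U(R)] \): if the two bodies cut out the same class of relations, then the guarded second-order quantifications, and hence the abbreviations, are equivalent. The reflexivity and transitivity bullets of \( \psi_T \) and \( \psi_U \) are syntactically identical. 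For the third bullet, the intended reading of the \( U \)-clause --- dictated by the prose gloss and by the stated aim that \( U\su * \) \emph{account for all the successors} of a situation where a noisy action is performed --- is the universal closure of \( \gamma(q)=a \land \alt(a,b,z) \land Poss(b,s) \land \obs(b,s)=o \land \delta(q,o)=q' \supset U(q,s,q',do(b,s)) \), with \( b,z \) among the implicitly outer-quantified variables. The noise-free stipulation \( \D \models \forall z(\alt(a,a',z) \equiv a=a') \) forces \( b=a \), and since \( z \) occurs nowhere else in the clause it drops out, leaving \( \gamma(q)=a \land Poss(a,s) \land \obs(a,s)=o \land \delta(q,o)=q' \supset U(q,s,q',do(a,s)) \) --- exactly the third bullet of \( \psi_T \) with \( T \) renamed to \( U \). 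Hence \( \D \models \forall R[\psi_T(R) \leftrightarrow \psi_U(R)] \), and so \( T\su * \equiv U\su * \).

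Plugging this equivalence back into the two correctness statements finishes the argument. A few remarks on the structure of the proof. The only hypothesis actually exercised is the triviality of \( \alt \); the remaining conditions --- that \( l \) be \( \{0,1\} \)-valued and track \( \obs \), and that the initial \( p \)-axioms fix a prior --- are part of the definition of a noise-free theory and would be needed only if one wanted to admit goals \( \phi \) mentioning \( K \), for then the use of \( l \) in the \( p \) successor-state axiom would have to be shown to reproduce Levesque's \( K \) successor-state axiom at the situation reached; the restriction to \( K \)-free \( \phi \) sidesteps this, since \( T\su * \) and \( U\su * \) do not involve \( K \) at all. The step I expect to need the most care, and which I would flag as the main obstacle, is the bookkeeping around the second-order abbreviations: justifying the descent from \( T\su * \leftrightarrow U\su * \) to the equivalence of the bodies \( \psi_T, \psi_U \), and pinning down the quantifier scoping in the third \( U \)-bullet so that the collapse under trivial \( \alt \) reproduces the \( T \)-bullet verbatim rather than something strictly weaker or stronger. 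It is finally worth noting, as a sanity check though it now follows for free, that in a noise-free theory a \( U\su * \)-run from \( (\qi,s) \) is deterministic, so the existential witness \( s' \) in Definition~\ref{defn theta  correctness} is exactly the one delivered for \( T\su * \) in Definition~\ref{defn Levesque correctness}.
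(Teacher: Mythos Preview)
Your proposal is correct and takes essentially the same approach as the paper: both observe that the only point of divergence between \( T\su * \) and \( U\su * \) is the third (one-step) clause, and that the noise-free stipulation \( \forall z(\alt(a,a',z) \equiv a=a') \) collapses the \( U \)-clause to the \( T \)-clause verbatim, whence the two correctness conditions coincide. The paper's proof is a two-sentence sketch of exactly this; your version simply spells out the second-order bookkeeping, the quantifier-scoping repair of the \( U \)-clause, and the role (or non-role) of the remaining noise-free hypotheses --- all of which the paper leaves implicit.
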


\begin{proof} \( \us \) differs from \( \ts \) is only one aspect, that of \( \alt \)-related actions governing the transition to a successor situation. By assumption, \( \forall z(\alt(a,a',z) \equiv a=a') \);  so Definition \ref{defn Levesque correctness}'s constraint on \( \ts \) coincides with  Definition \ref{defn theta  correctness}'s constraint on \( \us \). \qed

\end{proof}

Definition \ref{defn theta  correctness} only tests for a single goal-satisfying path, which is often referred to as a \emph{weak plan} \cite{Cimatti200335}. 
A property like {\it termination} could be formalised using: \begin{equation}\label{eq ter adequate}
	\begin{array}{l}
		\D\cup \fsa \models \forall s.~K(s,\ins) ~\supset \\ \hspace{.5cm}	\forall s'~[\us(\qi, s, 	q, s') \supset \thereis s''~(\us(q,s',\qf, s''))]. 
	\end{array}
\end{equation} 

It is well-known \cite{Cimatti200335} that in the absence of nondeterminism, termination is implied by the existence of a goal-satisfying path:\footnote{There are, of course, a number of other criteria in terms of which one characterises plan execution \cite{Cimatti200335}, a discussion of which is orthogonal to the issues of interest here and are hence omitted. Major criteria include \textit{fairness}, where if a nondeterministic action is executed infinitely many times then every  outcome is assumed to occur infinitely often, and \textit{acyclicity}, where the same state is not allowed to be visited twice in plan execution paths.}
 

\begin{proposition} Suppose \( \D, \X \)  and \( \phi \) are as above. If \( \X \) is correct for \( \phi \) in the sense of Definition \ref{defn Levesque correctness} then \eqref{eq ter adequate} holds.
	
\end{proposition}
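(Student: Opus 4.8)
\medskip\noindent\textbf{Proof plan.}
The plan is to reduce to Levesque's deterministic semantics and then exploit the fact that, without nondeterminism, plan execution from a fixed situation is \emph{linear}: every configuration has at most one successor, so reachability degenerates to a single chain and termination is forced by the existence of a single goal-reaching run. First I would note, as in the proof of the preceding theorem, that noise-freeness makes $\us$ and $\ts$ coincide as relations: under $\forall z(\alt(a,b,z)\equiv a=b)$ the conjunct $\thereis b,z\,(\alt(a,b,z)\land Poss(b,s)\land\obs(b,s)=o\land\delta(q,o)=q')$ reduces to $Poss(a,s)\land\obs(a,s)=o\land\delta(q,o)=q'$, and $l$ does not occur in $\us$. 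Hence it suffices to prove \eqref{eq ter adequate} with $\us$ replaced by $\ts$.

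The key step is a structural lemma: in every model of $\D\cup\fsa$ the one-step transition is a partial function on configurations. Fix $(q,s)$ with $q\in\Q\su -$; then $\gamma(q)$ is a unique action $a$, $Poss(a,s)$ has a fixed truth value, and if it is true then $\obs(a,s)$ is a unique object $o$ (as $\obs$ is a function) and $\delta(q,o)$ a unique state $q'$, so the one-step clause licenses at most the edge $(q,s)\mapsto(q',do(a,s))$; while if $q=\qf$ it licenses none, since $\gamma$ is undefined there, i.e.\ $\qf$ is a sink. From this, a fixed model yields from $(\qi,s)$ a unique maximal run $(\qi,s)=(q_0,s_0)\mapsto(q_1,s_1)\mapsto\cdots$, and an induction on step count shows that $\ts(\qi,s,q,s')$ holds iff $(q,s')=(q_i,s_i)$ for some $i$. (Here one uses that $\ts$ \emph{is} finite-chain-reachability: that relation satisfies the reflexivity, transitivity and one-step closure conditions, hence contains $\ts$; conversely each one-step edge and each reflexive pair lies in $\ts$, which is transitive, hence $\ts$ contains it.)

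Then I would combine the lemma with correctness. Fix $s$ with $K(s,\ins)$; Definition \ref{defn Levesque correctness} gives some $s^\ast$ with $\ts(\qi,s,\qf,s^\ast)$, so by the lemma $(\qf,s^\ast)=(q_k,s_k)$ for some $k$, and since $\qf$ is a sink the run terminates there: it is $(q_0,s_0)\mapsto\cdots\mapsto(q_k,s_k)$. For any $q,s'$ with $\ts(\qi,s,q,s')$ the lemma gives $(q,s')=(q_i,s_i)$ with $i\le k$, and the (possibly empty) segment $(q_i,s_i)\mapsto\cdots\mapsto(q_k,s_k)$ witnesses $\ts(q,s',\qf,s_k)$; hence $\thereis s''\,\us(q,s',\qf,s'')$, which is \eqref{eq ter adequate}.

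I expect the structural lemma to be the only real obstacle, and within it the careful identification of the second-order relation $\ts$ with finite-chain-reachability in an arbitrary (possibly nonstandard) model; the partial-functionality of the one-step relation, the sink property of $\qf$, and the linearity they give are routine consequences of Definition \ref{defn fsa}, the shape of $\fsa$ and the functionality of $\obs$. It is worth noting that, apart from the initial collapse $\us\equiv\ts$, the argument uses nothing but this linearity---which is precisely why it breaks, as it must, once $\alt$ is nontrivial and a configuration can have several successors.
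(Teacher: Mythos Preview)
The paper does not actually prove this proposition: it is stated without proof as a ``well-known'' fact (with a pointer to Cimatti et al.\ on strong vs.\ weak plans), so there is no paper argument to compare against. Your proof is correct and is essentially the standard one the citation alludes to: collapse $U^\ast$ to $T^\ast$ via noise-freeness (already done in the preceding theorem), observe that with $\gamma$, $\obs$ and $\delta$ all functions the one-step relation is a partial function on configurations with $\qf$ as a sink, so from any $K$-accessible $s$ the reachable configurations form a single chain; correctness puts $(\qf,s^\ast)$ on that chain, whence every reachable $(q,s')$ lies at or before it and admits a suffix to $\qf$. Your care in identifying the second-order $T^\ast$ with finite-chain reachability is the only nonroutine point, and you handle it properly (the relation of finite-chain reachability satisfies the three closure clauses, hence contains $T^\ast$; conversely $T^\ast$ is reflexive, closed under one-step, and transitive, hence contains every finite chain). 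The closing remark that the argument hinges entirely on linearity---and therefore fails once $\alt$ is nontrivial---is exactly the point the paper is making by placing the proposition immediately before the noisy-action example.
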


\begin{example}\label{ex noisy acting exact sensing tc} Imagine a tree chop problem with noise-free sensing (i.e., let \( \obs \) work as in \eqref{eq exact sensing tree chop}), but with noisy actions. Let \( chop \in \A \) correspond to the \( \L \)-action \( chop(1,1) \), with  \begin{equation}\label{eq likelihood noisy chop}
	l(chop(x,y),s) = \begin{cases}
		.9 & \textrm{if } x=y \\ 
		.1 & \textrm{if }y=0 \\ 
		0 & \textrm{otherwise}
	\end{cases}
\end{equation}
That is, the chop action does nothing with a small probability. Letting the initial theory be a \( p \)-based axiom for \eqref{eq:knowledge thickness discrete}, we see that the plan from Figure \ref{fig:tc} is correct in the sense of Definition \ref{defn theta  correctness}, and it is also a terminating plan, because regardless of how many times the action fails, there is clearly one execution path, that of the appropriate number of chops always succeeding, which stops after enabling the goal. 

	
\end{example}



While Definition \ref{defn theta  correctness} looks at every non-zero initial world, weaker specifications are possibile still. For example:\begin{equation}
	\D \cup \fsa \models \forall s.~p(s,\ins) \gt \kappa \supset \exists s'[U\su *(\qi,s, \qf, s') \land \phi(s')] 
	\tag{\ddag}
\end{equation}
looks at worlds with weights  \( \gt \kappa \) whereas 
\begin{equation}
	\D \cup \fsa \models \bel(\exists s'[U\su *(\qi,now, \qf, s') \land \phi(s')], \ins) \geq \kappa 
	\tag{$\sharp$}
\end{equation}
says that the sum (or integral) of initial worlds where there is a weak plan is \( \geq \kappa. \) Of course, since $K(s', s)$ is an abbreviation for $p(s',s) \gt 0$, and $\know(\phi,s) \doteq \bel(\phi,s) = 1$, we have: 

\begin{proposition} Definition \ref{defn theta  correctness} is equivalent to ($\ddag$) for $\kappa = 0$, and it is equivalent to ($\sharp$) for $\kappa = 1$. 

\end{proposition}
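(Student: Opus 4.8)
The plan is to derive both equivalences purely by unfolding the relevant abbreviations, together with one elementary observation: $\bel$ is always valued in $[0,1]$.

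For $\kappa=0$ there is essentially nothing to do. Recall that $K(s,\ins)$ is introduced as shorthand for $p(s,\ins)>0$, which is exactly ``$p(s,\ins)>\kappa$'' when $\kappa$ is $0$. So once the abbreviation $K$ is expanded, the sentence inside ($\ddag$) with $\kappa=0$ is verbatim the sentence inside Definition~\ref{defn theta  correctness}, and the two entailment claims coincide.

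For $\kappa=1$ I would argue in three short steps. First, under the standing BHL normalisability assumption that $\sum_{s'}p(s',s)$ is positive and finite, $\bel(\psi,s)$ is a quotient of a partial sum of non-negative $p$-weights by the total such sum, the index set of the numerator being contained in that of the denominator; hence $0\le\bel(\psi,s)\le 1$, and in particular ``$\bel(\psi,\ins)\ge 1$'' is equivalent to ``$\bel(\psi,\ins)=1$'', i.e.\ to $\know(\psi,\ins)$ by definition. Second, the same monotonicity of sums shows this ratio equals $1$ precisely when no world with positive weight is missing from the numerator, i.e.\ precisely when $\forall s.\,p(s,\ins)>0\supset\psi[s]$, i.e.\ $\forall s.\,K(s,\ins)\supset\psi[s]$. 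Third, instantiating $\psi$ with $\exists s'[\us(\qi,now,\qf,s')\land\phi(s')]$ turns this last formula into exactly the body of Definition~\ref{defn theta  correctness}. Since each of these steps is a validity about the reals holding in every model that meets the normalisation condition, it lifts to the claimed equivalence between the two ``$\D\cup\fsa\models\cdots$'' statements.

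The only delicate point is the second step: it uses that $\bel$ is a genuine (countable) sum, so that omitting a single strictly positive summand from the numerator strictly lowers the ratio. This is the reading intended throughout, and the one under which the examples are written (e.g.\ the finite-support prior~\eqref{eq p ini grasp}). Under the ``integral'' reading mentioned earlier one would instead have to read ``$\bel=1$'', and hence $\know$, modulo sets of $p$-measure zero, and the equivalence with Definition~\ref{defn theta  correctness} would then hold only in that qualified sense. I expect this to be the sole real subtlety; the remainder is routine bookkeeping with the definitions of $K$, $\bel$ and $\know$.
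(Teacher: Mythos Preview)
Your proposal is correct and follows essentially the same approach as the paper, which treats the proposition as immediate from the abbreviations $K(s',s)\doteq p(s',s)>0$ and $\know(\phi,s)\doteq\bel(\phi,s)=1$ stated just before it. You supply more detail than the paper does (in particular the $[0,1]$-boundedness step and the measure-zero caveat for the integral reading), but the underlying argument is the same definitional unfolding.
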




\begin{example} Imagine a tree chop problem with two types of trees, wooden ones and metal ones; the chop action has no effect on a metal tree \cite{DBLP:conf/kr/SardinaGLL06}. Suppose we have 3 worlds: the first with a wooden tree of thickness 1 and weight .4, the second with a wooden tree of thickness 2 and weight .4, and the third with a metal tree of arbitrary non-zero thickness with weight .2. If the goal is  \( d=0, \) and the plan is the one from Figure \ref{fig:tc}, then \( (\ddag) \) holds for (say) \( \kappa= .3  \) and \( (\sharp) \) holds for (say) \( \kappa = .7. \)

	%
	%
	
\end{example}

We do not think that there is one preferred definition for correctness. While Definition \ref{defn theta  correctness} attempts correctness in the sense of \cite{Cimatti200335}, planning for likely states, as in $(\ddag)$,  is very common in robotics when exploring large biased search spaces \cite{doi:10.1177/0278364904045471,Knepper2017}.

\section{Incorporating Noisy Sensors} 
\label{sec:belief_based_planning}

Accounts like Definition \ref{defn theta  correctness} and ($\sharp$) capture nondeterministic acting when the outcome of the action is immediately observable. In applications such as robotics, sensing is often noisy. 
 Similar to contingent planning \cite{DBLP:conf/aips/BonetG00,petrick2004extending,muise-aaai-15}, we will now motivate a  semantics of plan execution over belief states. 

To review the setting informally, consider a noise-free tree chop problem instantiated by \eqref{eq:knowledge thickness discrete}. Although the agent believes that \( d \in \set{1,\ldots,10} \), in the real world \( \ins, \) the tree has a fixed thickness, say 2. In this case, the controller discussed previously will advise two chop actions, which will be executed in each of the possible worlds, including \( \ins. \) 
At this point, a noise-free sensor returns \( down, \) and so, the agent will come to believe that the tree is down. Implicitly, all the worlds other than \( \ins \) considered possible initially will be discarded, as they are no longer compatible with the sensing results. (For example, the world in which  $d=1$ will be discarded after  the sensor says that the tree is still standing on doing a chop action, because that is clearly not possible in such a  world.) 
 However, a noisy sensor might return a $down$ despite the tree still standing. The point, then, is that the sensor's error profile will inform the agent how likely it is that a $down$ is observed when the tree still stands, and based on that, subsequent actions can be taken until it is believed that the tree is no longer standing.

To formalise this intuition, we will need to address two technical issues. 
First, observe that the account of \( \bel \) from BHL \cite{bacchus1999171} makes no mention of sensing functions, and in this sense, the language is only geared for projection. That is, we can infer  the value of \( \bel(d\leq 4, do(getd(3),\ins)) \) where we explicitly provide the sensor reading, but 
%
it is ill-formed in the language to reason about beliefs after \( getd \) sans argument that is determined only at run time. Moreover, as discussed above, the external feedback (\eg number reported on a sensor) will only be an estimate of the true property when we turn to   noisy sensors.

The solution to this issue by BHL was to define programs for sensing actions: for example, a sensor was defined as $\pi y.~getd(y)$, the latter being a GOLOG program \cite{reiter2001knowledge} that non-deterministically chooses the argument for the sensing action. In \cite{Belle:2015ab}, 
a slightly simpler technical device was introduced where $getd$ also stood for a program, but the semantics of program execution, which is defined over action sequences, incorporates run-time sensor readings. 
Neither of these solutions is appropriate for us, because: (a) we would like to avoid the complexity of defining GOLOG programs; and (b) we would like the new definition to be compatible with our previous accounts of plan execution, enabled via the $\obs$ function.  We achieve this by considering a  \emph{runtime sensing outcome function} \( \Pi\colon \A^ * \rightarrow  \O\). Recall that in the situation calculus, situations are not states, and the initial situation corresponds to the setting where the agent has not executed any action. In a way, $\Pi$ is an analogue to usual definitions of observation functions that map states to observations in that it responds to an action history. Then, we use $\obs$ to refer to runtime readings as follows:  \[ \begin{array}{l}
	\obs(a,do(a_k, do(\ldots, do(a\sub 1,\ivar)\ldots))) = \Pi(a\sub 1, \ldots, a\sub k,a).	
\end{array}
\]
With this machinery, we can use $\obs$ as usual in the plan execution semantics.\footnote{A more involved characterisation for $\Pi$ would also take into account the values of fluents at situations, which we omit here for simplicity.} We can then introduce parameterless sensing actions like \( getd \) whose likelihood is now defined to mimic \eqref{eq sonar gaussian} as follows: \begin{equation}
	l(getd,s) =   \N(\obs(getd,s);d(s),1). 
\end{equation}

The second technical issue is to interpret execution paths over belief states, but while referencing the real world to test for sensing outcomes. That is, starting from a control state (from the plan structure) and the agent's beliefs, we will need to define how a new control state is reached with an updated set of beliefs. So, we will need to ``reify'' beliefs in formulas: for any ground situation term \( s \), we introduce a new term \( \overline s \) to 
be used with formulas in that we write \( \phi(\overline s) \) to mean \( \forall s'.~K(s',s) \supset \phi(s'). \)  We will often use two such terms in a predicate for one-step transitions:  $V(t, \overline s, t', \overline s')$  can be first expanded to $\forall s^*.~K(s^*, s) \supset V(t, s^*, t', \overline s')$, which then expands to $\forall s^*, s^+.~[K(s^*, s) \land K(s^+, s')] \supset V(t, s^*, t', s^+)$. Recall that the $K$ fluent was assumed to be an equivalence relation, and so, roughly speaking, $V(t, \overline s, t', \overline {do(a,s)})$ can be seen as saying that starting from $t$ and the belief state given by the situation $s$ (that is, all  $K$-related situations from $s$), we perform a transition to $t'$ and the belief state given by $do(a,s)$. Formally, 
we define: 

\begin{definition}\label{defn execution semantics V}
		We use \( V\su *(q,\overline s,q',\overline {s'}) \) as abbreviation for $\forall V [\ldots \supset V (q, \overline s, q', \overline {s'})]$, where the ellipsis is the conjunction of the  universal closure of: \begin{itemize}
		\item $V( q,\overline s,q, \overline s)$ 
		\item \( V(q,\overline s,q'', \overline {s''}) \land V(q'',\overline {s''},q',\overline {s'}) \supset V(q, \overline s,q', \overline {s'}) \)  
		\item \(  \gamma(q)=a \land Poss(a, \overline s)  \land \obs(a,s) = o\) \\ \qquad \(  \land~ \delta(q,o)=q' \supset V ( q, \overline s, q', \overline {do(a, s)}) \).
	\end{itemize}
    
\end{definition}
The one-step transition  is based on the controller advising \( a, \)   this action being executable at all   accessible worlds, and the sensing function returning \( o \) for \( a \) at   \( s \), which is taken to be the real world. Most significantly, observe that \( \phi(\overline {do(a,s)}) \), by means of \( p \)'s successor state axiom, would implicitly account for all the \( \alt \)-related actions to \( a. \)\footnote{This is a key point as far as practical planning frameworks are concerned: $V^*$ does not look so different from the semantics of noise-free belief-based planning -- see the account in \cite{DBLP:conf/kr/SardinaGLL06}, for example; however, what is then needed is a set of belief states that correctly accounts for the unobservability of nondeterministic outcomes and how that changes with noisy sensing.} 

With this, we are prepared to reason about correctness: 

\begin{definition}\label{defn belief based correctness} For any goal formula \( \phi\in\L, \)  \( \D, \)  \( \X \) and its encoding \( \fsa, \) we say \( \X \) is epistemically correct for \( \phi \)  iff \[
	\D\cup \fsa \models \forall s.~K(s,\ins) \supset \exists s'~[V\su *(\qi, \overline {s}, \qf, \overline {s'}) \land \phi(\overline {s'})]
\] 
\end{definition}

%
%

\begin{figure}[t]
  \centering
    \includegraphics[width=.35\textwidth]{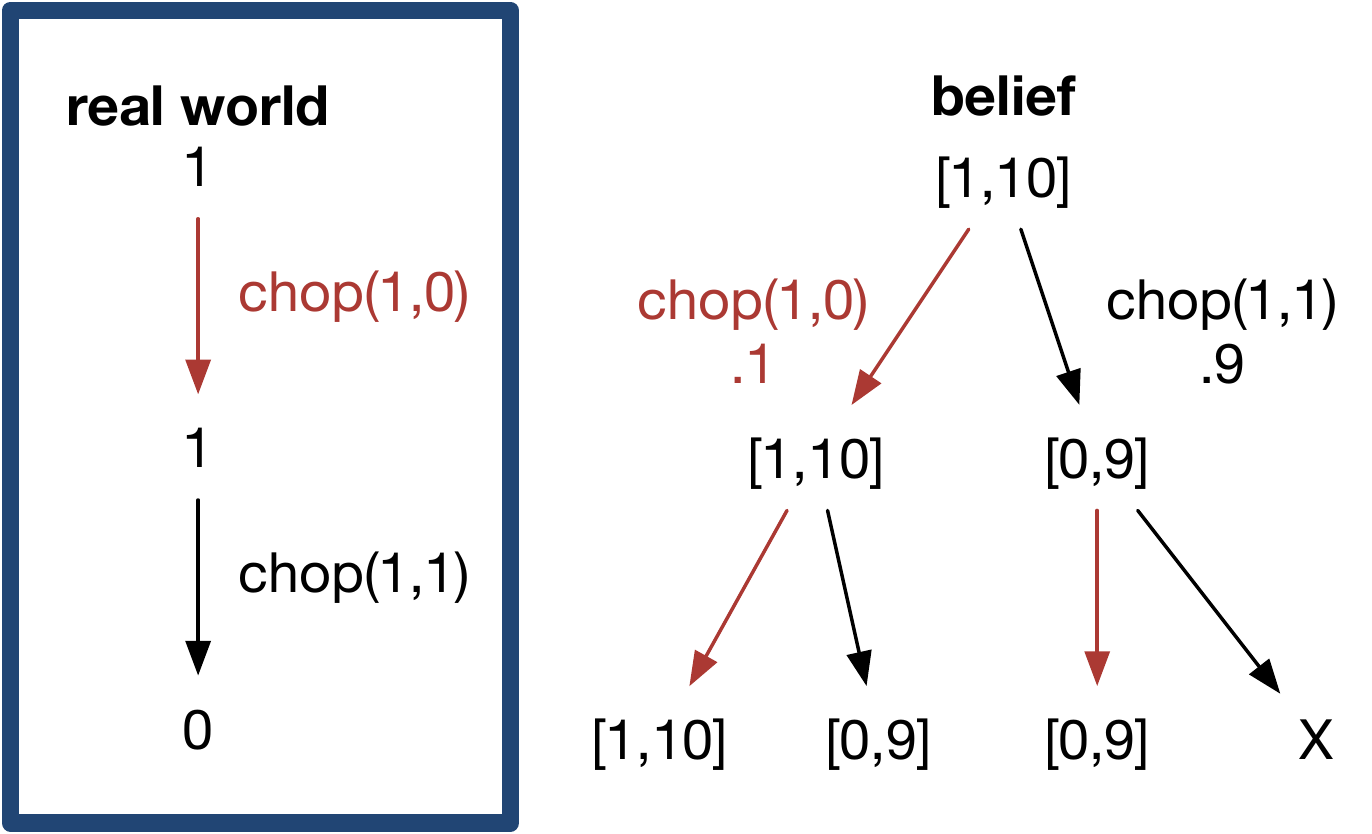}
  \caption{execution path for the tree chop problem with exact sensors against the controller from Figure \ref{fig:tc}}
  \label{fig:grasp-exact-sensor}
\end{figure}

Before turning to the case of noisy sensors, let us revisit the tree chop problem with noisy acting and exact sensing to better understand how belief state transitions work.  

\begin{example}\label{ex belief based exact sensing} Let  \( \D \) be an action theory built from \eqref{eq p ini grasp}, \eqref{eq alt mv}, and the likelihood axiom \eqref{eq likelihood noisy chop}. Suppose the sensor works as follows: \begin{equation}
	l(getd,s) = \begin{cases}
		1 & \obs(getd,s) = down \land d=0 \\
		1 & \obs(getd,s) = up  \land d\gt 0 \\ 
		0 & \textrm{otherwise}
	\end{cases}
\end{equation}
It is noise-free. Finally, suppose our goal is \( \bel(d<10,now) \gt .9 \).

A plan that is epistemically correct for this goal is given in Figure \ref{fig:grasp-exact-sensor}.  We only argue for the initial world \( \ins. \) 
It also depicts a possible execution path of the plan using \( V\su * \).  Let us suppose  \( d(\ins)=1. \)  The first action advised by the controller is \( chop, \) and suppose the action instantiates as \( chop(1,0) \). Then \( d(do(chop(1,0),\ins)) = 1; \) so its still standing, and also, the agent accords a belief of .9 to \( [0,9] \) that corresponds to a successful move, and a belief of .1 to its failure. The noise-free sensor naturally returns {\it up.} The controller advises \( chop \) again, and suppose this time, the action instantiates as \( chop(1,1) \). 

Incidentally, even without a sensor reading, the degree of belief in $d\lt 10$ is $\gt .9$ because the only branch that still entertains $d$ retaining a value of 10 is that of both chop actions failing, with a likelihood of $.1\times .1$. In any case, the controller advises a sensing action, which would return {\it down}, and so the plan terminates for \( \alpha =[chop(1,0)\cdot getd\cdot chop(1,1)\cdot getd] \) with \( \know(\bel(d\lt 10, now) \gt .9, do(\alpha,\ins)). \)
	
\end{example}


	\begin{figure}[t]
	  \centering
	    \includegraphics[width=.45\textwidth]{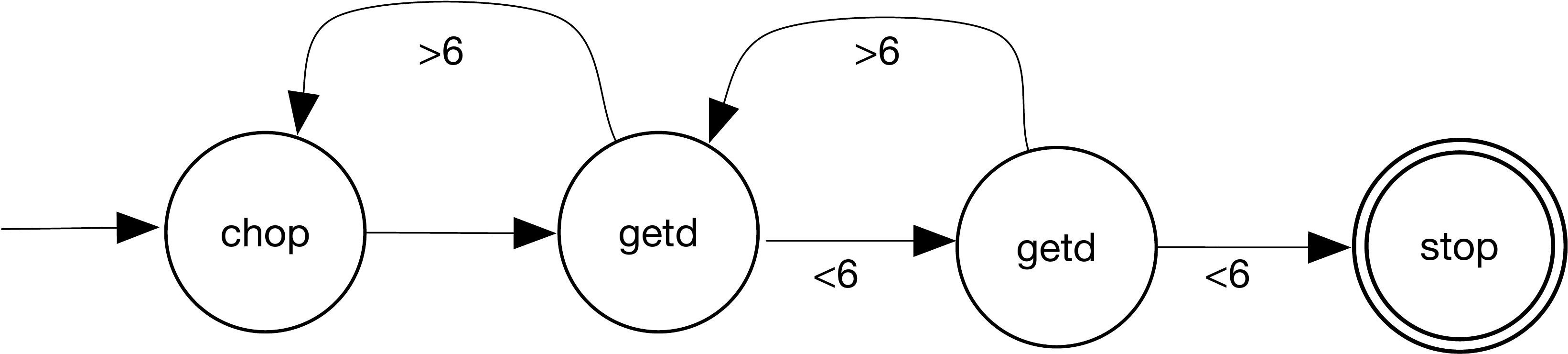}
	  \caption{a controller for chopping the tree with noisy sensing}
	  \label{fig:grasp-plan}
	\end{figure}

\begin{example}\label{ex grasping with noisy sensing} Consider the tree chop problem with noisy sensors and effectors. Suppose the initial theory and \( \alt \)-axioms are \eqref{eq p ini grasp} and \eqref{eq alt mv} as before, but the likelihoods for the effector is given by \eqref{eq likelihood noisy move} and that for the sensor is given by \eqref{eq sonar gaussian}. Finally, suppose our goal is \( \bel(d\leq 5,now) \gt .8 \). 
	
	A  plan that is epistemically correct for the goal is given in Figure \ref{fig:grasp-plan} wrt observed values of 5.5, 4.5 and 3.9. We only argue for \( \ins \). Assume also that $\{ \lt 6, \gt 6\} \in \O$ and that the numeric values obtained from the sensor map to these binary outcomes.
    
	Here, the controller advises \( chop \), after which the belief in \( \psi= d\leq 5 \) is \( \gt .5. \) This is because the likelihood of the action succeeding is more than it failing, and given the prior in $d\leq 5$ is .5, the posterior should be clearly greater than $.5$. 
Suppose now the sensed value is 5.5. The belief in \( \psi \) drops to \( \lt .5 \). The controller advises another \( chop \), at which point the belief in \( \psi \) increases to \( \gt .5. \) Next, we observe a   reading of 4.5 followed by 3.9. The controller terminates. On termination, it can be verified that 
the robot knows that the degree of belief in \( \psi \) is \( \gt .8 \).

\end{example}

\renewcommand{\vs}{{V\su *}}


In a noise-free setting, our definition of an  epistemically correct plan is downward compatible with Definition \ref{defn Levesque correctness} (and thus, Definition \ref{defn theta  correctness}): 

%
%

\begin{theorem}\label{thm belief based compatibility noise free} Suppose \( \D \) is a noise-free action theory, \( \X,\fsa \) as above, and \( \phi \) is any formula not mentioning \( K. \) If \( \X \) is epistemically correct for \( \phi, \) then it is correct for \( \phi \) in the sense of Definition \ref{defn Levesque correctness}.

\end{theorem}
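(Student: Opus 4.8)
The plan is to argue model-theoretically, reducing the belief-state execution to a family of single-world executions. Fix a model $M$ of $\D\cup\fsa$ and a situation $w$ with $M\models K(w,\ins)$; it suffices to produce a situation $w'$ with $M\models T\su *(\qi,w,\qf,w')\wedge\phi(w')$. Instantiating the epistemic-correctness hypothesis (Definition~\ref{defn belief based correctness}) at $\ins$, there is a situation $s'$ with $M\models V\su *(\qi,\overline{\ins},\qf,\overline{s'})$ and $M\models\phi(\overline{s'})$. Since $M\models K(w,\ins)$, the world $w$ belongs to the belief set $\overline{\ins}$, so the task reduces to extracting, from the first conjunct, a $T\su *$-path that starts at $w$ and ends in the belief set $\overline{s'}$.

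What drives the extraction is that in a noise-free theory the belief dynamics collapses onto the classical knowledge dynamics: triviality of the $\alt$-axioms removes the existential over alternative actions from the successor-state axiom for $p$, so the belief set reached by doing $a$ at $s$ is exactly the pointwise $do(a,\cdot)$-image of the part of the current belief set on which $a$ is executable, and $l$ mimicking $\obs$ makes every retained world agree, on $a$, with the outcome $o=\obs(a,s)$ that labels the step --- which, since (per Section~\ref{sec:belief_based_planning}) $\obs$ is determined by the action history, is the outcome reported by \emph{every} world of the belief set. Consequently, reading off a one-step $V$-clause of Definition~\ref{defn execution semantics V} at any world $v$ of the belief set yields precisely the matching clause of Definition~\ref{defn execution semantics T}: same advised action $\gamma(q)$, same observed $o$, same successor control state $\delta(q,o)$, and successor situation $do(a,v)$ still lying in the successor belief set. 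Chaining this along the closures gives $M\models T\su *(\qi,w,\qf,w')$ where $w'$ is the image of $w$ under the action sequence exposed by the $V\su *$-derivation, with $M\models K(w',s')$. Finally $\phi(\overline{s'})$ abbreviates $\forall s''.\,K(s'',s')\supset\phi[s'']$; since $\phi$ does not mention $K$ it is an objective claim, true at $w'$ by instantiation with $w'$. Hence $M\models\exists s'.\,T\su *(\qi,w,\qf,s')\wedge\phi(s')$, and as $M$ and $w$ were arbitrary this is Definition~\ref{defn Levesque correctness}.

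The step I expect to be the genuine obstacle is the chaining: $V\su *$ is a second-order least fixpoint over \emph{reified} belief states whereas $T\su *$ is one over situations, so ``restrict the $V\su *$-path to a single world'' has to be made precise. I would do this by introducing the auxiliary four-place relation on situations $\hat V(q,v,q',v')\doteq\exists v''\,[\,T\su *(q,v,q',v'')\wedge K(v'',v')\,]$ and checking that $\hat V$ satisfies the three closure conditions of Definition~\ref{defn execution semantics V} once the $\overline{\cdot}$ abbreviations are unfolded; since $V\su *$ denotes the least such relation we then get $V\su *\subseteq\hat V$, and instantiating at $\overline{\ins}$ and $\overline{s'}$ yields exactly the entailment used above. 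The reflexivity clause is immediate, and the transitivity clause only needs that belief sets are non-empty, which follows from the standing assumption that $K$ is an equivalence relation (hence reflexive); the work is all in the transition clause, where one uses trivial $\alt$ to identify $\overline{do(a,s)}$ with the pointwise image, uses $l$ mimicking $\obs$ together with $\obs$ being history-determined so that the $\delta$-transitions line up and $do(a,v_1)$ remains $K$-related to $do(a,v)$, and uses reflexivity of $K$ once more. A minor ancillary point is to confirm that the second-order abbreviation $V\su *$ indeed denotes the intended least relation, which is the standard reading of the ellipsis notation.
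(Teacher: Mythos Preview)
Your proof is correct and rests on the same idea as the paper's: from the $V^*$-witness extract a $T^*$-run for the world of interest, using reflexivity of $K$ to place that world in its own belief set and to pull $\phi$ back from $\overline{s'}$. The paper does this informally, describing the $V^*$-run as a least set of $\langle$control state, situation$\rangle$ nodes and reading off the center-world trajectory; your auxiliary $\hat V$ and the containment $V^*\subseteq\hat V$ is simply a more rigorous packaging of that same extraction via the standard least-fixpoint argument. The one substantive difference is where you instantiate the epistemic-correctness hypothesis: you choose $\ins$ and then project to the arbitrary $w$, which obliges you to argue that $\obs$ agrees across the whole belief set (hence your appeal to the history-determined, $\Pi$-based $\obs$ of Section~\ref{sec:belief_based_planning}). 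The paper instead instantiates directly at the counterexample world $s$, so the center of the $V^*$-run is $s$ itself and the sensing outcomes recorded along the run are exactly those needed for the $T^*$-transitions from $s$ --- no appeal to history-determination is required. Since $K$ is an equivalence relation the two instantiations name the same belief set and your detour is sound; the paper's choice is just more economical on that step.
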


\begin{proof} Suppose \( \X \) is epistemically correct but not correct. Then there is some \( s \) such that \( K(s,\ins) \) and \( \neg\thereis s'' \ts(\qi,s,\qf,s'') \land \phi(s''). \) By assumption, \( \vs(\qi, \overline s, \qf, \overline {s'}) \land \phi(\overline {s'}) \) for some \( s' \). Thinking of \( \langle \)control states, situations\( \rangle \) are ``nodes" in an execution path, the definition of \( \vs \) is the least set of pairs of nodes containing: \( \langle \qi,t \ra \) for all situation terms \( t \) such that \( K(t,s) \), which includes \( s \) because \( K \) is assumed to be an equivalence relation; \( \la q,do(a\sub 1,t)\ra \) for all situation terms \( t \) such that \( K(t,s) \) provided \( \qi \) advises \( a\sub 1 \), it is executable at every \( t \) and the sensing function returns \( o \) for \( a\sub 1 \) at \( s \) and \( \delta(\qi,o) = q \); and so on.
	 By assumption, \( \vs \)  contains  as node \( \la\qf,s'\ra \) for \( s' = do(a\sub 1\cdots a\sub k,s) \). But, by the definition of \( \ts, \) it  follows that \( \ts(\qi,s,\qf,s'). \) Moreover, since \( \phi(\overline {s'}) \) and \( K \) is reflexive, \( \phi(s') \). Contradiction. \qed

\end{proof}

\begin{figure}[t]
  \centering
    \includegraphics[width=.4\textwidth]{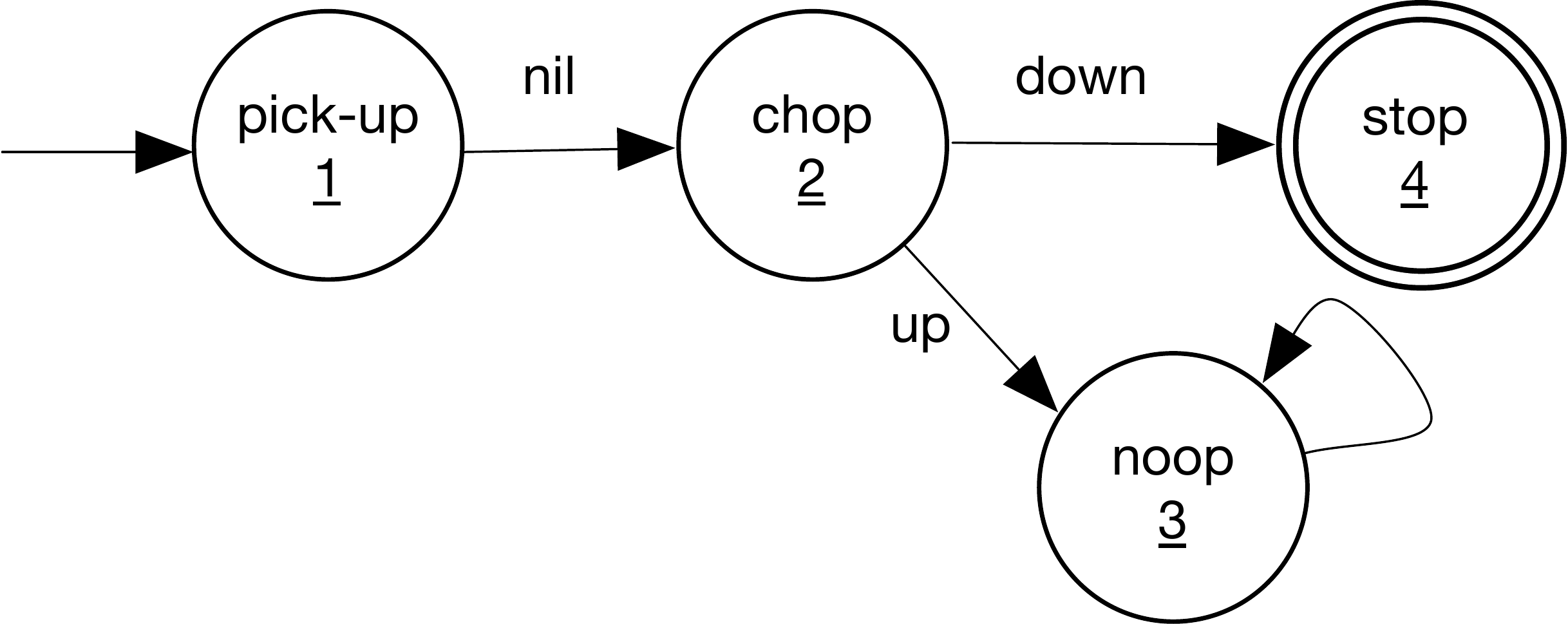}
  \caption{a problematic controller}
  \label{fig:beliefnotpc}
\end{figure}


In general, in the presence of noise, as one would expect, epistemic correctness diverges from correctness criteria based on plan evaluation at initial worlds. For example, we have: 



\begin{theorem}\label{thm epistemic does not entail ter} Suppose \( \D \) is any action theory,  \( \X,\fsa \) as above, and \( \phi \) is any formula not mentioning \( K. \) 
	If \( \X \) is epistemically correct for \( \phi, \) then it does not follow that \( \X \) satisfies \eqref{eq ter adequate}. 
	
\end{theorem}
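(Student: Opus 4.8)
The plan is to refute the implication with an explicit counterexample: I will give a basic action theory $\D$, a controller $\X$, and a goal $\phi$ such that $\X$ is epistemically correct for $\phi$ in the sense of Definition~\ref{defn belief based correctness}, yet $\X$ does not satisfy~\eqref{eq ter adequate}. Noise must play a role, since for a noise-free $\D$ the composition of Theorem~\ref{thm belief based compatibility noise free} with the earlier Proposition (Definition~\ref{defn Levesque correctness}-correctness entails~\eqref{eq ter adequate}) already yields the implication. So I would take a noisy-acting, exact-sensing tree-chop problem in the style of Example~\ref{ex belief based exact sensing}: the uniform prior~\eqref{eq p ini grasp} on $d\in\{1,\dots,10\}$, the $\alt$-axiom~\eqref{eq alt mv}, the failure-prone effector likelihood~\eqref{eq likelihood noisy chop}, and a noise-free $getd$ whose likelihood is $1$ on situations consistent with the reading and $0$ otherwise. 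For $\X$ I would use a ``problematic'' bounded-effort variant of Figure~\ref{fig:tc} --- of the kind in Figure~\ref{fig:beliefnotpc} --- that performs at most ten chop-then-sense rounds, moves to $\qf$ the moment $getd$ reports $down$, and, should $getd$ still report $up$ after the tenth round, passes to a non-final sink state $q_{stuck}$ with $\gamma(q_{stuck})=getd$ and $\delta(q_{stuck},o)=q_{stuck}$ for every $o$, so that control can never leave $q_{stuck}$. The goal $\phi$ can be taken to be anything the reached belief state trivially satisfies, e.g.\ $d\leq 10$; a substantive choice such as $\bel(d=0,now)>0.9$ works too, because the final $down$ reading, through $p$'s successor state axiom, drives to $0$ the weight of every accessible world with $d\neq 0$.

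First I would check epistemic correctness. Along any $\vs$ run the controller's \emph{intended} chop is the $\L$-action $chop(1,1)$, which genuinely decrements $d$; so from an initially possible world with $d=k\leq 10$ the run performs exactly $k$ rounds, the exact sensor reports $down$ in round $k$, and control reaches $\qf$ --- the transition into $q_{stuck}$ is never taken, since it would require $d\neq 0$ after ten successful chops. At the terminal situation $s'$ one has $\phi(\overline{s'})$ (immediate for $\phi=d\leq 10$; by the update remark above for $\phi=\bel(d=0,now)>0.9$). Hence $\D\cup\fsa\models\forall s.~K(s,\ins)\supset\exists s'[\vs(\qi,\overline s,\qf,\overline{s'})\land\phi(\overline{s'})]$, i.e.\ $\X$ is epistemically correct for $\phi$.

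Next I would show that~\eqref{eq ter adequate} fails. Take the initially possible world $s$ with $d(s)=1$. Since $\alt(chop,chop(1,0),\cdot)$ holds by~\eqref{eq alt mv} and $chop(1,0)$ is executable whenever $d\geq 1$, the transition clause of $\us$ forces into the least fixpoint the branch in which every round's chop instantiates as $chop(1,0)$ and so does nothing; the exact sensor then returns $up$ in each round (as $d$ stays $1$), and after ten rounds $\us$ reaches a node $(q_{stuck},s')$ with $q_{stuck}\neq\qf$. An easy induction on the construction of $\us$ shows that every tuple $(q_{stuck},\cdot,q',\cdot)$ in $\us$ has $q'=q_{stuck}$, so there is no $s''$ with $\us(q_{stuck},s',\qf,s'')$. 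Instantiating the (implicitly universally quantified) control-state variable in~\eqref{eq ter adequate} by $q_{stuck}$, the displayed formula fails at $s$; hence $\X$ does not satisfy~\eqref{eq ter adequate}, as the theorem asserts.

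The argument is not deep --- it is a counterexample --- but a few points need care, and these are where a sloppy version would break: one must keep $\vs$ (which follows the intended-action trajectory) cleanly separated from $\us$ (which branches over all $\alt$-related outcomes), so that the dead end is genuine for $\us$ yet lies off every $\vs$-trajectory; one must confirm $chop(1,0)$ is executable along the bad branch, so that branch really is $\us$-reachable from an initially accessible world; and, for a non-trivial $\phi$, one must verify that the Bayesian update at the terminal situation indeed forces $\phi(\overline{s'})$.
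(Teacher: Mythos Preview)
Your proposal is correct and rests on the same essential idea as the paper: build a counterexample exploiting the fact that $V^*$ advances the real-world situation along the \emph{advised} action (so the controller ``succeeds'' from every initially accessible $s$), while $U^*$ branches over all $\alt$-related outcomes (so a persistently-failing branch can reach a dead control state). The paper's construction is more minimal: it posits a single initial world, introduces a fresh $pickup$ action whose $\alt$ is $noop$ with trivial sensing, and uses a four-state controller (Figure~\ref{fig:beliefnotpc}) where the $noop$ branch lands in a non-terminating $q_3$ after one $chop$. Your construction instead reuses the running tree-chop machinery---the uniform prior, the $chop(1,0)$ alternative, an exact $getd$---and builds a bounded ten-round controller with an explicit absorbing sink $q_{stuck}$. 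What the paper's version buys is economy and fewer side conditions to check (no need to verify $Poss(chop,\overline s)$ across rounds, since there is only one world and everything is declared executable); what yours buys is that no new vocabulary is introduced and the divergence between $V^*$ and $U^*$ is exhibited directly on the paper's central example. Either way the crux is the same: the dead end is $U^*$-reachable from some $K$-accessible $s$ but lies off every $V^*$-trajectory, and you articulate that separation cleanly.
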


\begin{proof} To prove this result, it suffices to provide a (possibly unwise) controller that is epistemically correct, but does not satisfy \eqref{eq ter adequate}.  Consider the tree chop problem for a tree of unit thickness, but with an additional action for picking up a saw. Suppose there is only a single initial world \( \ins, \)  and \( \forall a, s(Poss(a,s)\equiv \true). \) Suppose the pick-up can fail, that is, suppose $alt(pickup, b, z) \equiv b = noop$, where $noop$ is the action of doing nothing. Suppose neither of these actions provide the agent with any meaningful sensing result: that is, $\forall s~ \obs(pickup, s) = \obs(noop, s) = nil$. 
Imagine a controller like in Figure \ref{fig:beliefnotpc}, where the control states are designated  by numbers. (That is, $q_1$ is the control state advising $pickup$ and $q_4$ is terminating state.)  
Initially, the controller advises $pickup$, leading to two successor situations: \( do(pickup,\ins) \) and \( do(noop,\ins) \). By way of the definition of $\obs$ and $Poss$ for these actions, we have \( \vs(q\sub 1, \overline {\ins}, q\sub 2, \overline {do(pickup,\ins)}) \), \( \us(q\sub 1, \ins, q\sub 2, do(pickup,\ins)) \) and \( \us(q\sub 1, \ins, q\sub 2, do(noop,\ins)) \). Suppose now \( \obs(chop, do(pickup,\ins)) = down \) and so we have \\ \( \vs(q\sub 1, \overline {\ins}, q\sub 4, \overline {do(pickup \cdot chop, \ins)}) \), and by construction, \( [d(now) = 0](\overline{do(pickup\cdot chop, \ins)}). \)  However, suppose \( \obs(chop, do(noop,\ins))  = up.\) Then we have \( \us(q\sub 1, \ins, q\sub 3, do(noop\cdot chop, \ins)) \), which will not terminate. \qed

\end{proof}



The intuitive reason is that the termination  conditions defined over \( \us \) respond to sensing results along every execution path, whereas \( \vs \) only responds to the sensing outcomes for the path of advised actions from an initial world and how belief changes with it. This can be seen to be not surprising: among other things, the much stronger \eqref{eq ter adequate} is not needed because incompatible worlds will get discarded after sensing. 


Let us conclude the section with two remarks. First,  as mentioned earlier, the definition of $V^*$ does not look very different from the semantics of noise-free belief-based planning, and this is good news: if the space of belief states is designed carefully to account for noise, algorithms for noise-free generalised planning may carry over to the stochastic case. Second, note that $V\su *$ was defined to include an explicit reference to sensing outcomes from the environment, which is external to the agent. 
A result in \cite{DBLP:conf/kr/SardinaGLL06} shows that it is not possible to realise that we are making progress towards the goal without such a construction in belief-based planning.

\section{Discussion and Conclusions} 
\label{sec:related_work}

Generalising plans has been of interest since the early days of planning \cite{fikes1972learning}. Algorithmic proposals to synthesise plans that generalise varied widely in methodology, ranging from interactive theorem proving \cite{DBLP:conf/aips/StephanB96} to learning from examples \cite{winner07_loop}. The convergence of these approaches to synthesise plans that solve multiple problem instances is a recent effort  \cite{DBLP:conf/aaai/BonetPG10,siddthesis,levesque2005planning,DBLP:conf/ijcai/HuL11,DBLP:conf/aips/HuG13}. We refer interested readers to \cite{siddthesis} for a comprehensive list of references, and 
\cite{srivastava2015tractability,DBLP:conf/ijcai/BonetGGR17} for recent advances on handling nondeterminism. Outside of Levesque's account on the correctness of program-like plans as an epistemic formulation, there are numerous variants \cite{DBLP:journals/ai/LinD98,Cimatti200335,Belle:2016ab,srivastava2015tractability,DBLP:conf/ijcai/BonetGGR17}. The semantics $U^*$ extended Levesque's $T^*$ to handle  noisy acting, and $V^*$ further extends that to noisy sensing, thereby obtaining a full generalisation of the formulation to handle nondeterminism.

Belief-based planning, which we touch upon, is widely studied,  e.g., \cite{DBLP:conf/aips/BonetG00}, and the usual approach is to formulate a nondeterministic (conformant) planning problem that treats belief states as first-class citizens. Our definition of \( \vs \) can be seen as a formalisation of this semantics against a logic of probabilistic belief and action. In that regard, the motivation behind this work is close in spirit to knowledge-based programs \cite{DBLP:journals/tocl/Reiter01,DBLP:journals/sLogica/LesperanceLLS00} and its stochastic extension \cite{Belle:2015ab}. These are formulated using the situation calculus and the high-level programming language GOLOG, but, of course, variant languages are also popular for developing such planning accounts \cite{thielscher:ICAART10}. At the outset, 
 there are significant reasons to develop a semantics customised to memoryless plans, as we argue below. Moreover, since there are a number of generalised planning algorithms that synthesise loopy plans \cite{DBLP:conf/aips/HuG13}, an execution semantics tailored to that representation is useful to understand how those algorithms can be applied to domains with noise. 
 
 Let us begin by observing that memoryless plans can be easily encoded as GOLOG programs consisting of  atomic physical actions and branches based on sensing outcomes. In general, given a program $\delta$, one is interested in showing that \[\D \cup \Theta \models Do(\delta, \ins, do(\sigma, \ins)) \] where $\Theta$ encodes the single-step transition semantics of $\delta$, and $\sigma$ is a ground sequence of actions such that $\delta$ terminates in $do(\sigma,\ins)$. The key feature of knowledge-based programs is that $\delta$ can mention $\know$, and the probabilistic belief operator  $\bel$ in \cite{Belle:2015ab}. Nonetheless, note that if $\delta$ does not mention $\bel$, the entailment criteria above is weaker than Definition \ref{defn Levesque correctness} as it only looks at $\ins$. But if it mentions the $\bel$ operator, then it seems closer to Definition \ref{defn belief based correctness}, but at the cost of a more cumbersome plan structure: $\delta$ can have unbounded memory (via while loops), can refer to complicated state properties, and is subjective, whereas Definition \ref{defn belief based correctness} is defined for  memoryless plans built purely from a finite set of atomic actions.

So, in the current paper, the end result is an account of correctness with widely-studied loopy plan structures, which eschews the complications of GOLOG but is able to achieve almost as much.
Naturally, then, relating knowledge-based programs and loopy plans (in belief-based settings) is likely to be of considerable theoretical interest, as would a closer study of the two execution semantics. 
 (Cf. also   \cite{DBLP:journals/ai/LinD98} on memoryless structures being effective, and \cite{DBLP:journals/sLogica/LesperanceLLS00} on knowing how to execute GOLOG programs.) 


 Despite focusing on probabilities and nondeterminism, this paper has established no connection  to the large body of work on Markov decision processes \cite{puterman}. 
 Mostly,  decision-theoretic planning frameworks are characterised in terms of optimality criteria against expected rewards, often enabled via dynamic programming, while we have treated goals as arbitrary formulas that are to be satisfied, as would symbolic planning frameworks such as \cite{Cimatti200335}. Nonetheless, one can imagine ways of recasting expected rewards in terms of goal satisfaction or vice versa \cite{series/synthesis/2013Geffner}, and that is arguably worth doing in the context of this paper so as to relate to efforts such as   \cite{poupart2004bounded}. Interestingly, recent robotics planners such as \cite{DBLP:journals/ijrr/KaelblingL13} eschews a planning paradigm that advises actions for every belief state, as one would in partially observable Markov decision processes, and instead resorts to a scheme that computes plans for a designated initial belief state, as in belief-based planning. Moreover, as mentioned before, ultimately the goal here was to generalise Levesque's account and to provide a rigorous foundation for extensions such as the handling of non-unique prior distributions.

As a final remark, like in Levesque's original formulation, one can motivate a generic planning procedure as follows: \begin{itemize}
	\item[] \textbf{input:}  \( \phi, \) \( E\su * \in \set{\ts,\us,\vs}, \Delta \) is a correctness criteria 
	\item[] \textbf{repeat with} \( \X \in \) {FINITE STATE CONTROLLERS} 
	\item[] \quad \textbf{if} \( \D\cup \fsa \models \forall s.~K(s,\ins) \supset \Delta(E\su *, \phi, s) \) \textbf{then} \textbf{return} \( \X \)
\end{itemize}
Naturally, we do not expect to use a full-blown  logical framework for planning, nor do we expect planners to actually use such a procedure in practise. Languages like the ones in \cite{Sanner:2010fk,DBLP:journals/ijrr/KaelblingL13} seem entirely reasonable. It is also conceivable that existing algorithms for generalised planning, such as bounded AND/OR searches, can be adapted for stochastic settings, as argued earlier, possibly by leveraging   abstraction techniques  
\cite{srivastava2015tractability,DBLP:conf/ijcai/BonetGGR17}. We hope this paper is also useful for approaching and resolving that line of  inquiry. 




\bibliographystyle{unsrt}
\balance

\end{document}